\DeclareSymbolFont{usualmathcal}{OMS}{cmsy}{m}{n}
\DeclareSymbolFontAlphabet{\mathcal}{usualmathcal}
\def\bR{{\mathbb R}}
\def\mL{\mathcal L}
\def\mO{\mathcal O}
\def\bz{{\textbf z}}
\def\beq{\begin{eqnarray}}
\def\eeq{\end{eqnarray}}
\def\sE{\mathsf E}
\newtheorem{theorem}{Theorem}
\newtheorem{definition}[theorem]{Definition}
\begin{document}
\pagestyle{SPstyle}

\begin{center}{\Large \textbf{\color{scipostdeepblue}{
A general learning scheme for classical and quantum Ising machines\\
}}}\end{center}

\begin{center}\textbf{
Ludwig Schmid\textsuperscript{1$\star$},
Enrico Zardini\textsuperscript{2} and
Davide Pastorello\textsuperscript{3,4}
}\end{center}

\begin{center}
{\bf 1} Chair for Design Automation --- Technical University of Munich,  80333 Munich, Germany
\\
{\bf 2} Department of Information Engineering and Computer Science --- University of Trento, 38123 Trento, Italy
\\
{\bf 3} Department of Mathematics --- University of Bologna, 40126 Bologna, Italy
\\
{\bf 4} TIFPA-INFN, 38123 Povo-Trento, Italy
\\
[\baselineskip]
$\star$ \href{mailto:ludwig.s.schmid@tum.de}{\small ludwig.s.schmid@tum.de}
\end{center}

\vspace{-1.2cm}
\section*{\color{scipostdeepblue}{Abstract}}
\boldmath\textbf{%
An Ising machine is any hardware specifically designed for finding the ground state of the Ising model. Relevant examples are coherent Ising machines and quantum annealers. In this paper, we propose a new machine learning model that is based on the Ising structure and can be efficiently trained using gradient descent. We provide a mathematical characterization of the training process, which is based upon optimizing a loss function whose partial derivatives are not explicitly calculated but estimated by the Ising machine itself. Moreover, we present some experimental results on the training and execution of the proposed learning model. These results point out new possibilities offered by Ising machines for different learning tasks. In particular, in the quantum realm, the quantum resources are used for both the execution and the training of the model, providing a promising perspective in quantum machine learning. 
}

\vspace{10pt}
\noindent\rule{\textwidth}{1pt}
\tableofcontents
\noindent\rule{\textwidth}{1pt}
\vspace{10pt}

\section{Introduction}
\label{sec:introduction}
Machine learning models are algorithms that provide predictions about observed phenomena by extracting information from a set of collected data (the training set).\! In particular, \emph{parametric models} capture all relevant information within a finite set of parameters, with the set being independent of the number of training instances \cite{russel2010}. A celebrated example is represented by \emph{artificial neural networks} \cite{chengneural1994, Zou2009, AlzubaidiReview2021}. In the context of quantum computers, a common approach to machine learning is to employ variational quantum circuits, which can be trained by backpropagation as done with classical feedforward neural networks \cite{Benedetti_2019, chenvariational2020, schuldmachine2021, pastorelloconcise2023}. In addition to gate-based quantum computing, \emph{quantum annealing} has also been considered to develop machine learning algorithms \cite{WILLSCH2020107006, nathreview2021, wangintegrating2022}. In any case, a crucial point in quantum machine learning is the implementation of quantum procedures for model training as alternatives to classical methods. An example in this sense is the quantum support vector machine, trained by running the HHL quantum algorithm \cite{rebentrostquantum2014}, which, however, presents the shortcoming of an impractical implementation on the currently available quantum devices. Therefore, a general challenge in quantum machine learning is to define learning schemes that can be efficiently implemented on quantum machines of the Noisy Intermediate-Scale Quantum (NISQ) era \cite{Preskill2018quantumcomputingin}. This is the motivation behind the present proposal of a learning model for quantum annealers in which the quantum resources are used both in the model execution and in the training process. The obtained theoretical and experimental results apply also to classical implementations of the model. Indeed, the key aspect of the training and execution of the proposed learning mechanism is the computation of the ground state of the \emph{Ising model}, which can, in principle, be solved using classical or quantum procedures.

An \emph{Ising machine} can be considered a specific-purpose computer designed to return the absolute or approximate ground state of the Ising model. The latter is described by the energy function of a spin glass system under the action of an external field, namely,
\begin{equation} 
\label{eq:Ising1}
\mathsf E(\bz)=\sum_{i=1}^N\theta_i z_i+\sum_{(i,j)}\Gamma_{ij}z_iz_j,\quad  \text{with}\ \bz\in\{-1,1\}^N, \theta_i\in\bR,\text{and}\  \Gamma_{ij}\in\bR,
\end{equation}
where the sum $\sum_{(i,j)}$ is taken over the pairs of connected spins, counting each pair only once. The ground state is the spin configuration $\bz^*\in\{-1,1\}^N$ that minimizes the function (\ref{eq:Ising1}). Therefore, in practice, an Ising machine solves a combinatorial optimization problem that can be represented as a quadratic unconstrained binary optimization (QUBO) problem, which is an NP-hard problem, by means of the change of variables $x_i=\frac{z_i+1}{2}\in\{0,1\}$. In particular, an Ising machine can be an analog computer that evolves toward the Ising ground state due to a physical process like thermal or quantum annealing. Alternatively, it can also be implemented on a digital computer in terms of simulated annealing. 

Ising machines are conceptually related to \emph{Boltzmann machines} in the sense that they are both defined in terms of the Ising model, with couplings among spins and the action of an external field. In the case of a Boltzmann machine, the coefficients $\theta$ and $\Gamma$ of the energy function (\ref{eq:Ising1}) are tuned so that, by sampling the spin configuration over the state of the system at thermal equilibrium (at a finite temperature $T$), a probability distribution resembling an input distribution defined on the training set is generated~\cite{AckleyB1985}. 

\noindent
In detail, the output distribution of a Boltzmann machine is given by
\beq\label{boltzmann}
p_T(\bz)=Z^{-1}\exp\left[-\frac{\mathsf E(\bz)}{k_BT}\right],
\eeq
where $Z:=\sum_\bz \exp\left[-\frac{\mathsf E(\bz)}{k_BT}\right]$ is the partition function and $k_B$ is the Boltzmann constant. Usually, only a subset of spins is sampled, the so-called \emph{visible nodes}, and the output distribution is given by the marginal distribution of (\ref{boltzmann}).
Instead, in the ideal case, the output of an Ising machine is deterministic and corresponds to the absolute minimum of (\ref{eq:Ising1}). However, in a realistic scenario in which the Ising machine operates by thermal annealing, the output is probabilistic and distributed according to (\ref{boltzmann}) with a value of $T$ as low as possible.

The difference between Boltzmann and Ising machines lies in the fact that Boltzmann machines are parametric generative models. In contrast, Ising machines are considered as solvers of combinatorial optimization problems \cite{bybeeefficient2022, mosheniising2022, mwamsojo:hal-04068504}. However, in this paper, we propose a supervised learning model for Ising machines whose training is inspired by the training of Boltzmann machines. A peculiar aspect of a Boltzmann machine is that it can be trained by gradient descent of a loss function $\mL$ depending on the weights $\theta$ and $\Gamma$, like the average negative log-likelihood between the input distribution and the generated distribution, iteratively changing the parameters by a step in the opposite direction of the gradient. However, the partial derivatives of $\mL$ are not explicitly calculated but are estimated by sampling the network units. For instance, let us consider the update rule $\Gamma_{ij}^{}\rightarrow \Gamma_{ij}^{}+\delta \Gamma_{ij}$, which updates the coupling terms toward the minimum of the average negative log-likelihood. The update step ($\delta \Gamma_{ij}$) is given by \cite{AckleyB1985}:  
\beq
\delta \Gamma_{ij}=-\eta\left(\langle z_iz_j\rangle - \sum_{\bf v} p_{data}({\bf v})\langle z_iz_j\rangle_{\bf v}\right)\qquad i,j=1,...,N,
\eeq
where $\eta>0$ is the learning rate (user-specified), the sum is taken over the visible nodes~$\bf v$, $p_{data}$ is the input distribution, $\langle\,\,\,\rangle$ is the Boltzmann average, and $\langle\,\,\,\rangle_{\bf v}$ is the Boltzmann average with clamped visible nodes. In other words, both the training and the execution of a Boltzmann machine are performed by sampling the units of the network at thermal equilibrium. A quantum version of the Boltzmann machine has also been proposed \cite{aminquantum2018}, and the simulations have shown that the presence of a \emph{transverse field Hamiltonian} improves the training process with respect to the classical model, generating distributions that are closer to the input one in terms of the Kullback-Liebler divergence. 

This paper adopts a similar viewpoint for training an Ising machine. After defining a parametric predictive model based on the ground state of the Ising model, we prove that it can be trained by gradient descent of a mean squared error loss function, executing the model itself to obtain the gradient estimates. In particular, the structure of the model does not require that the Ising machine returns the true ground state with infinite precision, and a suboptimal output works for training and executing the predictive model. In addition, our results apply to both classical and quantum machines. However, in the second case, the impact may be more significant since the quantum annealing resources are also exploited for the training process. In this sense, the purpose is similar to that of the \emph{parameter-shift rule}, which is used in gate-based quantum computing to train a parametric quantum circuit without explicitly calculating the partial derivatives~\cite{mitaraiquantum2018}. 

The paper is structured as follows: in \cref{sec:ising-model}, we introduce generalities and elementary notions about the Ising model and Ising machines, with a particular focus on quantum annealing; \cref{sec:proposed-model} deals with the proposed parametric learning model, to be executed by an Ising machine, and the main theoretical result of the paper, i.e., the proof that the model can be trained by running the Ising machine itself; in \cref{sec:empirical-evaluation}, an empirical evaluation of the proposed machine learning method is provided; in \cref{sec:conclusion}, we discuss the perspectives of the proposal, and we draw our conclusions on the proposed parametric model.

\section{Ising machines}
\label{sec:ising-model}
This section introduces the formal definition of the Ising model and the concept of using specific Ising machines to solve the corresponding groundstate problem. Afterward, we briefly describe the two Ising machines employed in this work, namely simulated and quantum annealing.

The \emph{Ising model} is a mathematical description extensively utilized in the study of ferromagnetism. Renowned for its versatility and simplicity, it stands as a fundamental paradigm in the domain of statistical mechanics \cite{SherringtonSolvable1975}. In its general formulation, the Ising model is defined on a graph $(V,E)$, wherein each vertex represents a discrete variable $z_i\in\{-1,1\}$. These variables correspond to \emph{spins}, with associated \emph{biases} $\theta_i \in \bR$ denoting the inclination of each spin toward one of the two available values. Furthermore, the weighted edges $\Gamma_{ij} \in \bR$ connecting two spins $i$ and $j$ define the coupling dynamics between the spins, indicating their preference to align or oppose each other in value. This graph structure is illustrated in \cref{fig:ising-machine}. The total energy of a spin configuration $\bz\in\{-1,1\}^{\vert V\vert}$ is expressed as
\beq\label{eq:Ising_energy}
\sE(\theta, \Gamma, \bz)=\sum_{i=1}^{\vert V\vert} \theta_i z_i+\sum_{(i,j)\in E} \Gamma_{ij} z_i z_j\
= \theta \bz + \bz^T \Gamma \bz,
\eeq
where the \emph{biases} $\theta_1,...,\theta_{\vert V\vert}\in\bR$ and the \emph{couplings} $\Gamma_{ij}\in \bR\ \forall (i,j) \in E$ are conveniently consolidated into the vector $\theta$ and the matrix $\Gamma$ (with $\Gamma_{ij}=0$ when $(i,j)\not\in E$), respectively. Realistically, the values of the parameters are bounded. Hence, it is possible to assume that biases and couplings take values into compact intervals of $\bR$. Within the realm of statistical physics, these quantities are typically referred to as the external magnetic field strength and spin interactions due to their fundamental roles in the physical manifestation of the Ising model.

An \emph{Ising machine} can be defined as a non-von Neumann computer for solving combinatorial optimization problems \cite{tanakatheory2020}. More precisely, its input is represented by the energy function of the Ising model (\ref{eq:Ising_energy}), with biases and coupling terms properly initialized. The machine effectively operates by minimizing the energy function and providing the optimal spin configuration $\bz^*$ as the output. Actually, the quest to determine the ground state of an Ising model is of significant importance, as any problem within the NP complexity class can be formulated as an Ising problem with only a polynomial increase in complexity \cite{Lucas2014}. An elementary and abstract definition of an Ising machine, motivated by the general approach adopted in this paper, is the following:
\begin{definition}
Given the energy function defined in (\ref{eq:Ising_energy}), an {\bf (abstract) Ising machine} is any map $(\theta,\Gamma)\mapsto \bz^* := \mbox{argmin}_{\bz}\sE(\theta, \Gamma, \bz)$.
\end{definition}
\noindent
Additionally, we can also consider the minimum value of the energy $\sE_0 (\theta, \Gamma) :=\sE(\theta,\Gamma, \bz^*)$ as the output of an Ising machine. This ground state energy of the Ising model is obtained by substituting the spin configuration \mbox{$\bz^*=\mbox{argmin}_{\bz}\sE(\theta, \Gamma, \bz)$} into (\ref{eq:Ising_energy}). In this context, the Ising machine consistently yields a numerical result with a negative sign. An illustration of an Ising machine that finds the ground state of a small Ising model is shown in \cref{fig:ising-machine}.

\begin{figure}[tb]
    \centering
    \includegraphics[width=0.65\textwidth]{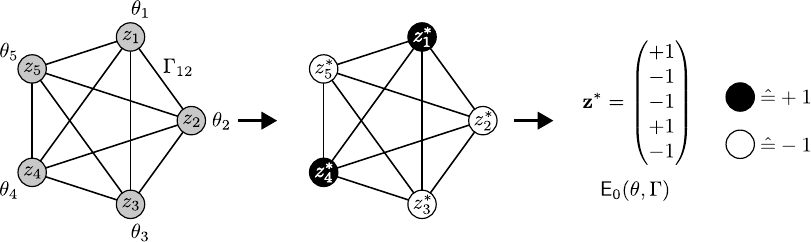}
    \caption{\small \textbf{Ising model and Ising machine:} On the left, an illustration of the graph structure of an Ising model characterized by a fully connected graph, with $\vert V\vert = 5$ spins $\bz$, corresponding biases $\theta$, and couplings $\Gamma$. An Ising machine maps the Ising model to the right-hand side of the figure, returning a $\{-1, +1\}$ assignment (illustrated as white/black nodes) to each binary variable $z_i$. The output is the spin configuration $\bz^\ast$ and the corresponding minimal energy $\sE_0(\theta,\Gamma)$.}
    \label{fig:ising-machine}
\end{figure}

Relevant examples of Ising machines as specific-purpose hardware devices are quantum annealers~\cite{haukePerspectivesQuantumAnnealing2020} or coherent Ising machines with optical processors~\cite{mcmahonfully2016, Inagakilarge2016, inagakicoherent2016, yamamotocoherent2020}. However, an Ising machine can also be simulated on a classical digital computer. In this respect, simulated annealing is a standard approach and addresses the Ising model as a combinatorial optimization problem. In more detail, simulated annealing is a probabilistic metaheuristic inspired by the analogical notion of controlling the cooling process observed in physical materials ~\cite{kirkpatrickOptimizationSimulatedAnnealing1983}. The algorithm employs stochastic acceptance criteria, resembling a Boltzmann probability, to navigate the solution space and escape local optima. Over time, usually indicated by a temperature parameter $T$ that mimics the cooling process, less favorable moves are increasingly rejected. In practice, simulated annealing employs random search and local exploration to converge toward near-optimal or optimal solutions. However, although the algorithm is easy to implement and robust from a theoretical point of view, it may present a slow convergence rate \cite{guilmeausimulated2021}. A promising alternative path is the development of analog platforms like coherent Ising machines. They represent optical parametric oscillator (OPO) networks in which the collective mode of oscillation beyond a certain threshold corresponds to an optimal solution for a given large-scale Ising model~\cite{mcmahonfully2016, Inagakilarge2016, inagakicoherent2016, yamamotocoherent2020}. The learning scheme proposed here is agnostic and can be implemented on this kind of Ising machines. Nevertheless, in the experimental part we have considered only simulated and quantum annealing. 

Quantum annealing is a type of heuristic search used to solve optimization problems  ~\cite{finnilaQuantumAnnealingNew1994,kadowakiQuantumAnnealingTransverse1998,mcgeochadiabatic2014, haukePerspectivesQuantumAnnealing2020}. The procedure is implemented by the time evolution of a quantum system toward the ground state of a \emph{problem Hamiltonian}. More precisely, let us consider the time-dependent Hamiltonian
\begin{equation}
H(t)=\gamma(t) H_D+H_P\qquad t\geq 0,
\end{equation}
where $H_P$ is the problem Hamiltonian, $H_D$ is the \emph{transverse field Hamiltonian}, and $\gamma:\bR^+\rightarrow\bR$ is a decreasing function. Roughly speaking, $H_D$ gives the kinetic term inducing the exploration of the solution landscape by means of quantum fluctuations, and $\gamma$ attenuates the kinetic term driving the system toward the ground state of $H_P$. Quantum annealing can be physically realized by considering a network of qubits arranged on the vertices of a graph $(V,E)$, with $\vert V\vert=n$ and whose edges $E$ represent the couplings among the qubits. In detail, the problem Hamiltonian is defined as the following self-adjoint operator on the $n$-qubit Hilbert space $\mathsf H=(\mathbb C^2)^{\otimes n}$:
\begin{equation}\label{HP}
H_P=\sum_{i\in V} \theta _i \sigma_z^{(i)} +\sum_{(i,j)\in E} \Gamma_{ij}\sigma_z^{(i)} \sigma_z^{(j)},
\end{equation}
with real coefficients $\theta_i, \Gamma_{ij}$, which are identified again as biases and couplings due to their similar role in the Ising model. In the computational basis, the $2^n\times 2^n$ matrix $\sigma_z^{(i)}$ acts locally as the Pauli matrix
\begin{equation}
\sigma_z=\left(\begin{matrix}
1 & 0\\
0 & -1
\end{matrix}
\right)
\end{equation}
on the $i$-th tensor factor and as the $2\times 2$ identity matrix on the other tensor factors. In fact, the eigenvectors of $H_P$ form the computational basis of $\mathsf H$, and the corresponding eigenvalues are the values of the classical energy function (\ref{eq:Ising_energy}). On the other hand, for the transverse field Hamiltonian a typical form is
\begin{equation}
H_D=    \sum_{i\in V} \theta _i \sigma_x^{(i)}, 
\end{equation}
where the local operator $\sigma_x^{(i)}$ is defined in a similar way to $\sigma_z^{(i)}$ in terms of the Pauli matrix
$$\sigma_x=\left(\begin{matrix}
0 & 1\\
1 & 0
\end{matrix}
\right).$$
$H_D$ does not commute with $H_P$ and provides the unbiased superposition of all the conceivable solutions as the system initial state. Eventually, it is worth highlighting that quantum annealing is related to \emph{adiabatic quantum computing} (AQC) as the solution of a given problem can be encoded into the ground state of a problem Hamiltonian. However, the two notions do not coincide. Indeed, in quantum annealing, the quantum system is not assumed to be isolated; therefore, it can be characterized by a non-unitary evolution. Another difference is that, in quantum annealing, the entire computation is not required to take place in the instantaneous ground state of the time-varying Hamiltonian like in AQC \cite{mcgeochadiabatic2014}.

\section{The proposed model}
\label{sec:proposed-model}
This section formally introduces the proposed parametric model, followed by an in-depth discussion on the training using gradient descent and the estimation of the relevant partial derivatives of a quadratic loss function. The final part presents practical considerations required to operate and train the model in real-world scenarios and a discussion of its computational cost.

\subsection{Definition}
\label{subsec:model-definition}
In the context of supervised learning, the goal of an algorithm is to approximate a function $f: X \rightarrow Y $ given a training set $\{(x_1,f(x_1)),...,(x_N,f(x_N))\}$, which is a collection of elements in the set $X$ with the corresponding values of $f$. An approximation of $f$ can be obtained through a parametric function after an optimal choice of its parameters, generalizing the information encoded into the training set. In fact, the notion of a parametric model is closely related to the existence of a parametric function that can be used to approximate the target function.  

\begin{definition}
Let $X$ and $Y$ be non-empty sets respectively called {\bf input domain} and {\bf output domain}. A (deterministic) {\bf parametric model} is a function
$$x\mapsto y=F(x\vert\Gamma)\qquad x\in X,\,\, y \in Y,$$  
with $\Gamma$ being a set of real parameters.
\end{definition}

\noindent In practice, given a training set of input-output pairs, the task consists in finding the parameters $\Gamma$ such that the model assigns the correct or approximately correct output, with high probability, to any previously unseen input.\! The parameters are typically determined by optimizing a \emph{loss function} such as
$$\mathcal L(\Gamma)=\frac{1}{N}\sum_{i=1}^N \mathsf d(y_i,F(x_i\vert\Gamma)),$$
where $\mathsf d$ is a metric defined over $Y$, and the procedure is commonly referred to as \emph{training}.

A preliminary depiction of the general problem considered in this paper is the following: given a real-valued function $f:X\rightarrow\bR$, with $X\subset\bR^n$ and $n \in \mathbb{N}$, the objective consists in training a predictive model $F$ that approximates the original function $f$ within the supervised learning framework. This function approximation task encompasses a wide range of conventional machine learning endeavors such as regression and classification. In particular, the proposed parametric model is defined over the concept of Ising machines as introduced in \cref{sec:ising-model}. The input information is encoded into the biases $\theta$ of an Ising model, while the adjustable parameters are represented by the couplings $\Gamma$ of (\ref{eq:Ising_energy}). The Ising machine is then used to find the ground state of the Ising model, and the corresponding ground state energy is used as the model output. Note that the ground state energy invariably assumes a negative value, and the magnitude of the input biases significantly influences its absolute magnitude. To account for this, we introduce an ancillary scaling factor denoted as $\lambda$ and an energy offset indicated as $\epsilon$. This yields the subsequent formulation of the model.

Given an Ising machine, an input vector $\theta=(\theta_1, \dots,\theta_n)\in X\subset\bR^n$, and the parameters $\{\Gamma_{ij}\}$ with $i,j = 1 \dots n$ (the nonzero $\Gamma_{ij}$ are specified by the topology graph of the machine), one can define a parametric model $F$ based on the ground state energy of an Ising model as
\begin{align}\label{eq:model}
    F(\theta \vert \Gamma,\lambda, \epsilon):&=\lambda\!\!\!\min_{\bz\in\{-1,1\}^n} \sE(\theta,\Gamma, \bz) + \epsilon \nonumber \\
    &= \lambda \, \sE_0(\theta, \Gamma) + \epsilon \, ,
\end{align}
where $\lambda \in \bR$ and $\epsilon \in \bR$ are additional tunable parameters that do not influence the Ising model energy. The model definition reveals a general neural approach in the sense that data are represented by the biases of the spins, which can be associated with neurons, and the parameters are the weights attached to the connections between spins (neurons). It is worth noting that, for the model execution, there is no requirement that the Ising machine returns the true ground state. More precisely, the fact that an approximated ground state does not match the exact solution of the combinatorial problem underlying the minimization is not a severe drawback for the learning process. Indeed, assuming that the deviation of the energy output from $\sE_0$ is systematic (e.g.\!, due to the finite precision of the Ising machine), this deviation becomes a characteristic of the model itself, and the training procedure accordingly provides optimized parameters. Despite its simplicity, the model presents interesting training properties that we mathematically characterize in the next section. 

\subsection{Training process}
\label{subsec:model-training}
Training the proposed parametric model for the approximation of a real-valued function entails minimizing the empirical risk across a provided dataset, denoted as $\mathcal{D}$, encompassing input-output pairs derived from the original function. To this aim, we employ the conventional approach of optimizing the model parameters to minimize the mean squared error (MSE) between the predicted output and the actual data values. 

Given the training set $\mathcal{D} = \{(\theta^{(a)}, y^{(a)})\}_{a=1,...,N}$, with $y^{(a)}=f(\theta^{(a)})$, where $f:X\rightarrow\bR$, with $X\subset\bR^n$, is an unknown function to approximate, the model (\ref{eq:model}) can be trained by minimizing the {MSE loss function}
\beq\label{eq:loss}
\mL(\Gamma,\lambda,\epsilon)=\frac{1}{N}\sum_{a=1}^N [F(\theta^{(a)} \vert \Gamma,\lambda,\epsilon)-y^{(a)}]^2.
\eeq

\noindent Our objective is to address this minimization task employing a gradient descent approach, iteratively updating the parameters $\Gamma$, $\lambda$, and $\epsilon$ by taking steps in the direction opposite to the gradient of the loss function $\mL$:
\beq\label{eq:parameter-variation}
\delta\Gamma=-\eta \nabla_\Gamma \mL,\qquad \delta\lambda=-\eta \frac{\partial\mL}{\partial \lambda},\qquad \delta \epsilon=-\eta \frac{\partial\mL}{\partial \epsilon} \, ,
\eeq
where $\eta>0$ is the learning rate, which controls the optimization step size. Let us remark that each parameter is assumed to take values into a compact interval in $\bR$; consequently, the parameter space is a hyperrectangle. On one hand, the partial derivatives of $\mL$ with respect to $\lambda$ and $\epsilon$ are well-defined and trivial to calculate. On the other hand, the following theorem, which provides the update rules for the optimization of $\mL$ by gradient descent, implies that the gradient $\nabla_\Gamma\mL$ is defined almost everywhere in the parameter hyperrectangle.

\begin{theorem}
    \label{thm:params-update-rules}
    Let $F$ be the parametric model defined in (\ref{eq:model}), $\mathcal{D} = \{(\theta^{(a)}, y^{(a)})\}_{a=1,...,N}$ be a training set for $F$, $\mL$ be the MSE loss function defined in (\ref{eq:loss}), and $\eta>0$ be the learning rate. Then, the partial derivatives of $F$ with respect to the couplings $\Gamma$ are defined almost everywhere in the parameter space, and the update rules for $\Gamma$, $\lambda$, $\epsilon$ for the gradient descent of $\mL$ are:
    \begin{align}\label{eq:gradient_descent_gamma}
    \Gamma_{ij}^{(k+1)}=\Gamma_{ij}^{(k)}- \eta \frac{2\lambda^{(k)}}{N}\sum_{a=1}^N  [\lambda^{(k)} \, \sE_0(\theta^{(a)}, \Gamma^{(k)}) + \epsilon^{(k)} -y^{(a)}]z^*_iz_j^*,
    \end{align}
    \begin{align}\label{eq:gradient_descent_lambda}
    \lambda^{(k+1)}&=\lambda^{(k)}- \eta\frac{2}{N}\sum_{a=1}^N  \left[\lambda^{(k)} \, \sE_0(\theta^{(a)}, \Gamma^{(k)}) + \epsilon^{(k)} -y^{(a)}\right] \left[ \sum_{i=1}^n\theta^{(a)}_i z^*_i+ \sum_{(i,j)\in E}\Gamma_{ij}^{(k)} z^*_i z^*_j\right],
    \end{align}
    \begin{align}\label{eq:gradient_descent_epsilon}
    \epsilon^{(k+1)}&=\epsilon^{(k)}-\eta \frac{2}{N}\sum_{a=1}^N  [\lambda^{(k)} \, \sE_0(\theta^{(a)}, \Gamma^{(k)}) + \epsilon^{(k)} -y^{(a)}],\qquad\qquad
    \end{align}
    where $\Gamma^{(k)}$, $\lambda^{(k)}$, $\epsilon^{(k)}$ are the values of the parameters within the $k$-th iteration of the gradient descent, and $\boldsymbol{z}^*=\mbox{\emph{argmin}}_\bz \sE(\theta^{(a)},\Gamma^{(k)},\bz)$.
\end{theorem}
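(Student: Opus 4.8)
The plan is to reduce the whole statement to the chain rule, isolating the only genuine subtlety: differentiating the ground-state energy $\sE_0(\theta,\Gamma)$ with respect to the couplings. I would dispatch the easy parameters first. Since $F(\theta\vert\Gamma,\lambda,\epsilon)=\lambda\,\sE_0(\theta,\Gamma)+\epsilon$ is affine in both $\lambda$ and $\epsilon$, differentiating the loss (\ref{eq:loss}) directly gives $\partial\mL/\partial\lambda=\frac{2}{N}\sum_a[F(\theta^{(a)}\vert\Gamma,\lambda,\epsilon)-y^{(a)}]\,\sE_0(\theta^{(a)},\Gamma)$ and $\partial\mL/\partial\epsilon=\frac{2}{N}\sum_a[F(\theta^{(a)}\vert\Gamma,\lambda,\epsilon)-y^{(a)}]$. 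Writing $\sE_0(\theta^{(a)},\Gamma)=\sum_i\theta_i^{(a)}z_i^*+\sum_{(i,j)\in E}\Gamma_{ij}z_i^*z_j^*$ and inserting these into the update prescription (\ref{eq:parameter-variation}) reproduces (\ref{eq:gradient_descent_lambda}) and (\ref{eq:gradient_descent_epsilon}) verbatim.

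For the couplings the key observation is structural: for each fixed input $\theta^{(a)}$ and each fixed configuration $\bz\in\{-1,1\}^n$, the map $\Gamma\mapsto\sE(\theta^{(a)},\Gamma,\bz)$ is affine, with $\partial\sE/\partial\Gamma_{ij}=z_iz_j$. Hence $\sE_0(\theta^{(a)},\Gamma)=\min_{\bz}\sE(\theta^{(a)},\Gamma,\bz)$ is a pointwise minimum of finitely many ($2^n$) affine functions of $\Gamma$, i.e.\ a concave, piecewise-linear function. Such a function fails to be differentiable only on the locus where two or more configurations simultaneously attain the minimum with distinct gradients; this locus is contained in a finite union of the affine hyperplanes $\{\Gamma:\sE(\theta^{(a)},\Gamma,\bz)=\sE(\theta^{(a)},\Gamma,\bz')\}$ and therefore has Lebesgue measure zero in the parameter hyperrectangle. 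Taking the finite union over $a=1,\dots,N$ preserves the measure-zero property, which establishes the claimed almost-everywhere differentiability of $\nabla_\Gamma\mL$.

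On the complement of this exceptional set the minimizer $\bz^*=\mathrm{argmin}_\bz\sE(\theta^{(a)},\Gamma,\bz)$ is unique (or any tied minimizers share the same value of $z_iz_j$), so the envelope theorem — Danskin's theorem for a minimum over a finite index set — applies: the derivative of the minimum value equals the partial derivative of the selected affine piece evaluated at the minimizer, giving $\partial\sE_0(\theta^{(a)},\Gamma)/\partial\Gamma_{ij}=z_i^*z_j^*$. The chain rule then yields $\partial F(\theta^{(a)}\vert\Gamma,\lambda,\epsilon)/\partial\Gamma_{ij}=\lambda\,z_i^*z_j^*$, and differentiating (\ref{eq:loss}) produces $\partial\mL/\partial\Gamma_{ij}=\frac{2\lambda}{N}\sum_a[F(\theta^{(a)}\vert\Gamma,\lambda,\epsilon)-y^{(a)}]\,z_i^*z_j^*$; substituting into $\delta\Gamma=-\eta\nabla_\Gamma\mL$ gives exactly (\ref{eq:gradient_descent_gamma}).

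I expect the main obstacle to be the careful treatment of the discrete minimization rather than any hard analysis. One must resist naively differentiating $\sE_0$ as though $\bz^*$ were a constant, and instead justify two distinct points: that the piecewise-linear (equivalently, concave) structure confines non-differentiability to a measure-zero union of hyperplanes, and that the envelope theorem legitimately transfers the gradient of the active affine piece to $\sE_0$ wherever the argmin is (essentially) unique. Once these two facts are in place, everything else is routine chain-rule bookkeeping.
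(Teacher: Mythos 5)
Your proposal is correct and its conclusion coincides with the paper's, but you take a more rigorous route through the one genuinely delicate step. The paper's own proof differentiates $F$ by direct calculation as in (\ref{eq:partial_F_gamma}), treating $\bz^*$ as fixed, and then justifies this with the remark that each component $z^*_l(\theta,\Gamma)$ is piecewise constant, so $\partial z^*_l/\partial\Gamma_{ij}$ vanishes almost everywhere and the spin-flip points are points of non-differentiability; the measure-zero character of that exceptional set is asserted rather than demonstrated. You instead exploit the structure of $\sE_0(\theta^{(a)},\cdot)$ as a pointwise minimum of $2^n$ affine functions of $\Gamma$, hence concave and piecewise linear, confine the non-differentiability locus to a finite union of genuine hyperplanes (correctly noting that ties among minimizers are harmless when they share the gradient $z_iz_j$, e.g.\ configurations related by a global spin flip), and invoke Danskin's envelope theorem to identify $\partial\sE_0/\partial\Gamma_{ij}=z_i^*z_j^*$ on the complement. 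What your version buys is an actual proof of the ``defined almost everywhere'' clause of the theorem, which in the paper rests on the informal piecewise-constancy observation; what the paper's version buys is brevity and a phrasing that directly explains why naively differentiating with $\bz^*$ held constant is legitimate. Your handling of $\lambda$ and $\epsilon$ via affinity of $F$ in these parameters is identical in substance to the paper's, and your substitution into (\ref{eq:parameter-variation}) reproduces (\ref{eq:gradient_descent_gamma})--(\ref{eq:gradient_descent_epsilon}) exactly, so there is no gap.
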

    
\begin{proof}
    By direct calculation, the partial derivative of $F$ with respect to $\Gamma_{ij}$ is
    \beq\label{eq:partial_F_gamma}
    \frac{\partial F(\theta \vert\Gamma,\lambda,\epsilon)}{\partial \Gamma_{ij}}=\lambda\frac{\partial }{\partial \Gamma_{ij}}\left( \sum_{i=1}^n \theta_i z^*_i+\sum_{(i,j)} \Gamma_{ij} z^*_i z^*_j,\right) = \lambda z_i^*z_j^*,
    \eeq
    where $z_i^*$ and $z_j^*$ are the $i$-th and $j$-th components of $\bz^*=\mbox{argmin}_\bz \sE(\theta,\Gamma,\bz)$, respectively. 
    Since the optimal spin configuration $\bz^*$ also depends on $\Gamma$ (and $\theta$), we should consider the derivatives $\partial z^*_l  / \partial \Gamma_{ij}$ for $l=1,...,n$ in the final step outlined in (\ref{eq:partial_F_gamma}). However, it must be noted that the function $z^*_l=z^*_l(\theta,\Gamma)$ is piecewise constant. Hence, its derivative is zero almost everywhere in its domain, and the remaining points, corresponding to spin flips of $z^*_l$, turn out to be points of non-differentiability of $z^*_l(\theta,\Gamma)$. Substituting (\ref{eq:partial_F_gamma}) into (\ref{eq:parameter-variation}), we obtain the following update step ($\delta \Gamma_{ij}$) for the MSE loss function (\ref{eq:loss}):
    \begin{align}
        \label{eq:step}
        \delta \Gamma_{ij}=-\eta\frac{\partial \mL}{\partial \Gamma_{ij}}&=-\eta\frac{2}{N}\sum_{a=1}^N [F(\theta^{(a)} \vert \Gamma,\lambda,\epsilon)-y^{(a)}]\frac{\partial F}{\partial \Gamma_{ij}}\\
        &=-\eta\frac{2\lambda}{N}\sum_{a=1}^N  [F(\theta^{(a)} \vert \Gamma,\lambda,\epsilon)-y^{(a)}] z^*_iz_j^*\hspace{2.5cm} \nonumber \\
        &=-\eta\frac{2\lambda}{N}\sum_{a=1}^N  \left[\lambda \, \sE_0(\theta^{(a)}, \Gamma) + \epsilon -y^{(a)}\right] z^*_iz_j^*.
    \end{align}
    Therefore, the parameter update rule for the $(k+1)$-th iteration turns out to be
    \beq
    \Gamma_{ij}^{(k+1)}=\Gamma_{ij}^{(k)}- \eta \frac{2\lambda^{(k)}}{N}\sum_{a=1}^N  [\lambda^{(k)} \, \sE_0(\theta^{(a)}, \Gamma^{(k)}) + \epsilon^{(k)} -y^{(a)}]z^*_iz_j^*,
    \eeq
    wherein we have omitted the explicit dependence of $z_i^*$ and $z_j^*$ on $a$ and $k$ for the sake of brevity of notation. The update rules for $\lambda$ and $\epsilon$ can be derived analogously. Specifically, the partial derivatives of $F$ with respect to $\lambda$ and $\epsilon$ are 
    \beq
    \frac{\partial F(\theta \vert\Gamma,\lambda,\epsilon)}{\partial \lambda}=\sum_{i=1}^n \theta_i z_i^*+\sum_{(i,j)} \Gamma_{ij} z^*_i z^*_j,\quad\quad \frac{\partial F(\theta \vert\Gamma,\lambda,\epsilon)}{\partial \epsilon}=1.
    \eeq
    Then, the claims (\ref{eq:gradient_descent_lambda}) and (\ref{eq:gradient_descent_epsilon}) follow.
\end{proof}

\noindent In this way, the model parameters can be optimized for a certain number of steps $N_\mathrm{epochs}$. The complete training process is described as pseudocode in \cref{alg:model-training} and illustrated as a flow diagram in \cref{fig:model-training}. In particular, for each training step $k$, the model is evaluated on each $(\theta^{(a)},y^{(a)})$ pair in the training set $\mathcal{D}$ and the parameters are updated according to \cref{thm:params-update-rules}. The trained model is defined by the final iteration as 
\begin{equation}
    \label{eq:trained-model}
    F_{\mathrm{model}}(\theta) = F(\theta \vert \Gamma^{N_{\mathrm{epochs}}},  \lambda^{N_{\mathrm{epochs}}},  \epsilon^{N_{\mathrm{epochs}}})\,.
\end{equation}

\begin{algorithm}[!t]
    \small
    \caption{Model training}
    \label{alg:model-training}

    \KwIn{dataset $\mathcal{D} = {\{( \theta^{(a)},y^{(a)})\}}_{a=1, \dots, N}$, learning rate $\eta$,  optimization steps $N_{\mathrm{epochs}}$}
    \KwOut{trained model $F_{\mathrm{model}}(\theta)$}
  
    $\text{Initialize the parameters }\Gamma$;
    
    \For{\textup{step} k \textup{in} $N_{\mathrm{epochs}}$}{
        \For{$(\theta^{(a)}, y^{(a)}) \textup{ in } \mathcal{D}$}{
            run the Ising machine to obtain $\sE_0(\theta^{(a)}, \Gamma^{(k)})$ and $\bz^*$;
        }
        update $\Gamma^{(k)}, \lambda^{(k)}, \epsilon^{(k)}$ according to (\ref{eq:gradient_descent_gamma}) - (\ref{eq:gradient_descent_lambda}) - (\ref{eq:gradient_descent_epsilon}); 
    }
    \textbf{return}  $F_{\mathrm{model}}(\theta) = F(\theta \vert \Gamma^{N_{\mathrm{epochs}}},  \lambda^{N_{\mathrm{epochs}}},  \epsilon^{N_{\mathrm{epochs}}})\,;$
\end{algorithm}

\begin{figure}[t!]
    \vspace{10pt}
    \centering
    \includegraphics[width=1\textwidth]{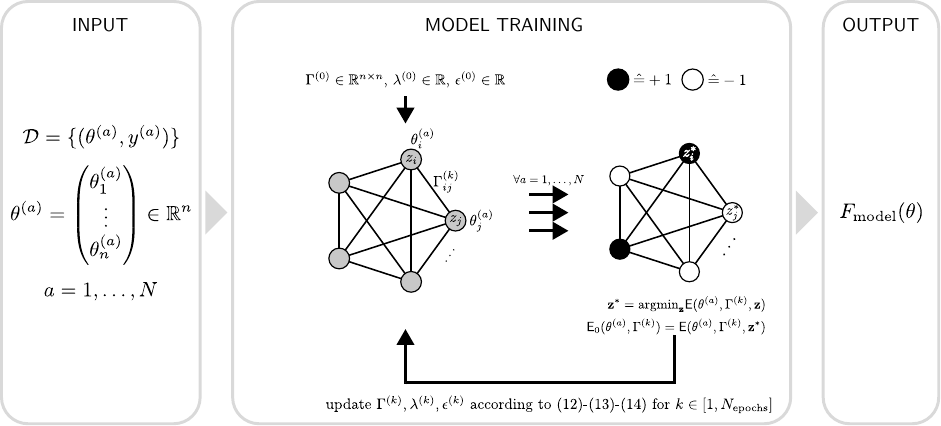}
    \caption{\small \textbf{Model training:} Illustration of the training process for the proposed model. In particular, given a dataset $\mathcal{D} = \{(\theta^{(a)}, y^{(a)})\}_{a=1,...,N}$, an Ising model is instantiated for each sample by setting the biases to $\theta^{(a)}$ and using the couplings $\Gamma$ as free parameters. Then, for each model, an Ising machine is run in order to obtain the spin configuration $\bz^\ast$ and the corresponding model minimal energy $\sE_0$. Finally, the collected values are used to update the couplings $\Gamma$ and the two additional parameters $\lambda$ and $\epsilon$ according to the rules presented in \cref{thm:params-update-rules}. This procedure is repeated $N_\mathrm{epochs}$ times until the trained model $F_{\mathrm{model}}(\theta) = F(\theta \vert \Gamma^{N_{\mathrm{epochs}}}, \lambda^{N_{\mathrm{epochs}}}, \epsilon^{N_{\mathrm{epochs}}})$ is returned. }
    \label{fig:model-training}
\end{figure}

\noindent Therefore, the training process bears similarities to that of a neural network but with a noteworthy distinction. Indeed, in our model, the conventional backpropagation step for calculating the partial derivatives is replaced by the Ising machine computation of $\sE_0$ and $\bz^*$. In particular, we propose the usage of quantum annealing as a well-suited Ising machine, which serves a dual purpose: executing the model according to (\ref{eq:model}) and facilitating the model training through the iterative assessment of the loss function gradient. In detail, the spin configuration $\bz^*$, retrieved from the annealer and representing the ground state of the qubit network, can be used to compute the parameter adjustments according to (\ref{eq:gradient_descent_gamma}), (\ref{eq:gradient_descent_lambda}) and (\ref{eq:gradient_descent_epsilon}). Instead, the corresponding energy value is used to compute the model prediction.

This ability to utilize the output of the Ising machine to train and evaluate the model constitutes the major distinction to other Ising machine-based models~\cite{kitaiDesigningMetamaterialsQuantum2020,sekiBlackboxOptimizationIntegervariable2022} that require an explicit calculation of the corresponding derivatives to update the model parameters.

A model trained in this manner possesses the capability to predict inputs beyond those present in $\mathcal{D}$. Analogously to other machine learning models, this rests upon the expectation that, if the model is trained on an extensive dataset, it can assimilate and generalize from those examples, ultimately serving as an approximation of the original function within a certain value range. Moreover, although the Ising energy (\ref{eq:Ising_energy}) depends only linearly on the input vector $\theta$, determining the minimum energy entails a complex interplay between the input and the model parameters $\Gamma$. Consequently, an open theoretical question regarding the class of functions that can be approximated through the proposed methodology arises. In other words, given an Ising model, what is its expressibility in terms of ground state energies by varying only the qubit couplings? From a practical perspective, the limitations of the quantum annealer architecture (number of qubits, topology connectivity, value bounds for $\theta$ and $\Gamma$) impose additional obvious constraints. 

\subsection{Hidden spins}
\label{subsec:hidden_spins}
In the proposed model, assuming a complete topology graph, the number of tunable parameters $\Gamma_{ij}$ scales quadratically with respect to the input dimension $n$. In practice, the number of model parameters is intrinsically fixed by the input dimensionality, akin to a neural network featuring only input and output layers. In the neural network scenario, to enhance the model expressiveness, the number of parameters is typically augmented by introducing additional hidden layers. In a similar way, we consider additional \textit{hidden spins}, represented by additional nodes in the topology graph. These additional spins increase the number of couplings and, therefore, the number of parameters of the model. This is accomplished by adding a preprocessing step, 
\begin{equation}
    \label{eq:pre-processing}
    h_\mathrm{pre}: \bR^n \rightarrow \bR^{n_\mathrm{total}} \, ,
\end{equation}
mapping the original input vector $\theta$ from the feature space $\bR^n$ to a higher-dimensional space characterized by $n_\mathrm{total} = n + n_\mathrm{hidden}$ dimensions, with $n_\mathrm{hidden}$ representing the number of additional hidden spins. An illustration of this preprocessing step and the increase in the number of coupling parameters is given in \cref{fig:hidden_spins}.

\begin{figure}[htb]
    \centering
    \includegraphics[width=0.55\textwidth]{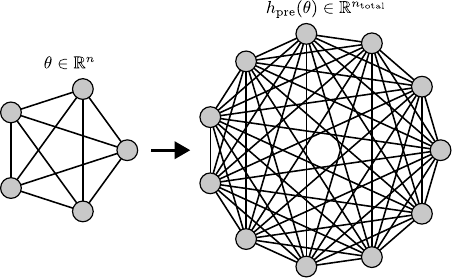}
    \caption{\small \textbf{Hidden spins:} Two exemplary Ising models with full connectivity. This comparison shows the increase in trainable coupling parameters (graph edges) when the original input $\theta$ is mapped to a higher dimensional space using a preprocessing step $h_\mathrm{pre}$.}
    \label{fig:hidden_spins}
\end{figure}

The preprocessing step does not affect the training process. Indeed, the model can still be trained as described in \cref{subsec:model-training}. Instead, the choice of the preprocessing function exerts a significant influence on the model's performance. For instance, let us consider a trivial preprocessing procedure that appends zero values to the input vector in order to reach the desired dimension. Although this approach would increase the number of model parameters, the hidden spins would be indistinguishable from each other, resulting in a very similar learning behavior and making them redundant. In contrast, initializing the additional dimensions with random values would mitigate this issue, but these values may overshadow the original input, especially if $n_\mathrm{hidden} \gg n$. In this work, we propose and evaluate a first simple scheme to initialize additional spins based on a constant real-valued offset. This \textit{offset initialization} approach is defined as
\beq
    \label{eq:offset_init}
    \theta \in \bR^n 
    \rightarrow
    h_\mathrm{offset}(\theta) = \begin{pmatrix}
        \theta \\
        \theta + 1 \cdot d \\
        \vdots \\
        \theta + (l-1) \cdot d
    \end{pmatrix} \in \bR^{n_\mathrm{total}} \, ,
\eeq
where $d \in \bR^n$, $l \in \mathbb{Z}^+$, and $n_\mathrm{total} = ln$ (i.e., $n_\mathrm{total}$ is a multiple of $n$). This corresponds to a repeated concatenation of the original input $\theta$ with an increasing real-valued offset $d$. 

\subsection{Computational cost}
\label{subsec:computational-cost}
To define the computational cost of the proposed machine-learning model, three aspects must be taken into consideration: the initialization of the model, including the embedding into the available Ising machine; the training of the model, with repeated calls to the Ising machine and the update of the parameters; the evaluation on the provided input. In the following, the space and time complexity of the proposed model are discussed.

The encoding of the problem requires $n_{\mathrm{total}}$ spin variables, considering the possibility of additional hidden spins (see \cref{subsec:hidden_spins}). However, to achieve an all-to-all connectivity on sparse topology graphs like those of the D-Wave machines, an additional quadratic overhead has to be paid~\cite{dateEfficientlyEmbeddingQUBO2019}, resulting in a total space complexity of $\mO (n_{\mathrm{total}}^{2})$.

The time complexity of the initialization (including the embedding) is $\mO (n_{\mathrm{total}}^{2})$~\cite{dateEfficientlyEmbeddingQUBO2019}. Instead, the training of the model requires $N_{\mathrm{epochs}}$ optimization steps. Specifically, in each optimization step, the Ising machine is evaluated on $N$ samples and the $\mO(n_{\mathrm{total}}^{2})$ model parameters are updated. To provide its output, the Ising machine has to solve the spin glass system of the model (Equation \ref{eq:Ising1}), which is an NP-hard problem in general~\cite{mukherjeeMultivariableOptimizationQuantum2015,dateQUBOFormulationsTraining2021}. Even for quantum annealers, the time required to find an exact solution is expected to be inversely proportional to the minimum energy gap of the ground state~\cite{haukePerspectivesQuantumAnnealing2020}, resulting in an exponential worst-case complexity~\cite{mukherjeeMultivariableOptimizationQuantum2015}. Nevertheless, an approximate solution can be found by leveraging the non-adiabatic system evolution~\cite{haukePerspectivesQuantumAnnealing2020}. Therefore, we can assume the Ising machine runtime to be upper bounded by some $\tau$ throughout the whole training process. Given the Ising machine outputs, the parameters update can be done in time $\mO(n_{\mathrm{total}}^{2}\,N)$ using \cref{thm:params-update-rules}. Overall, the time complexity of the model turns out to be $\mO(n_{\mathrm{total}}^{2} + N_{\mathrm{epochs}}\,N(\tau +n_{\mathrm{total}}^{2}) + \tau)$, where the last term corresponds to the evaluation of the model using a single Ising machine execution.

\section{Empirical evaluation}
\label{sec:empirical-evaluation}
This section provides an initial proof of concept of the model's capabilities. Indeed, this is neither a benchmarking exercise nor an in-depth analysis of the model's expressiveness but a demonstration of possible use cases and applications of the model. A detailed performance evaluation of the model, entailing the necessary statistical repetitions and the comparison to alternative models, is left for future work. To simplify the usage of the model, a Python package that automates the repeated calls to the Ising machines during the training of the model and also facilitates the cross-usage with other common Python machine learning packages (such as PyTorch) was published on Github~\cite{lsschmidIsingLearningModel2023}. As a first experiment, the model has been trained on randomly sampled datasets to demonstrate the trainability of the model itself according to the update rules of \cref{thm:params-update-rules}. Then, as real-world demonstrations, the model has been trained for the function approximation task and also as a binary classifier for the bars and stripes dataset.

\subsection{Experimental setup}
\label{subsec:exp-setup}
As discussed in \cref{sec:proposed-model}, the model supports different Ising machines. In this work, we have considered simulated annealing and quantum annealing, both provided by the D-Wave Ocean Software SDK~\cite{dwave_ocean}. While the former represents a software implementation of simulated annealing, the latter directly accesses the superconducting annealing hardware supplied by D-Wave. In particular, the \textit{Advantage\_system5.4} has been used here. More in detail, the quantum annealing hardware in question is characterized by 5760 qubits and is based on the Pegasus topology, with an inter-qubit connectivity of 15. To control the hardware, D-Wave provides the Ocean SDK, which includes multiple software packages facilitating the handling of the annealing hardware. Among them, it is worth mentioning the \textit{minorminer} package, which has been used to embed the problems into the annealer topology. In practice, to achieve the desired connectivity (all-to-all in this case), multiple physical qubits are chained together to form logical qubits; the drawback lies in the reduced number of available qubits. In particular, in each run, the embedding has been computed once for a fully connected graph of the required size and reused in the subsequent calls to the annealer; for this aim, the \textit{FixedEmbeddingComposite} class of the Ocean SDK has been employed. Regarding the actual annealing process, the default setup has been used, namely, automatic rescaling of bias and coupling terms to fit the available hardware ranges, chain strength settings according to \textit{uniform\_torque\_compensation}, an annealing time of $20 \mu s$, and a twelve-point annealing schedule. To account for the high number of calls to the annealing hardware throughout training and save hardware access time, a number of reads (sampling shots) equal to 1 has been used for each annealing process. For more information, refer to Zenodo~\cite{schmidzenodo2023}, where the set of notebooks used has been made available.

Concerning the model parameters, in all experiments, the couplings $\Gamma_{ij}^{(0)}$ have been initialized to zero and updated according to (\ref{eq:gradient_descent_gamma}). Instead, $\lambda$ and $\epsilon$ have been kept fixed throughout the training process and considered as hyperparameters to facilitate the learning process. Specifically, the selection of the $\lambda$ value has been done manually to ensure that the model output was reasonably well-aligned with the range of values of the training data. By contrast, the $\epsilon$ value has been set according to the outcomes of the first round of sampling. In detail, the following rule has been used:
\begin{align}
    \label{eq:offset_init_value}
    \epsilon = \frac{1}{N} \sum_{a=1}^{N} \left[y^{(a)} - F(\theta^{(a)} \vert \Gamma^{(0)}, \lambda, 0)\right] = \frac{1}{N} \sum_{a=1}^{N} \left[y^{(a)} + \lambda \sum_{i=1}^n \vert \theta^{(a)}_i \vert \right] ,
\end{align}
with the last equivalence being valid only if $\Gamma_{ij}^{(0)} = 0$ for $i,j\in \{1,\dots,n\}$.

\subsection{Random data}
\label{subsec:random-data}
To demonstrate the trainability of the model, 30 distinct datasets, each comprising $N=20$ data points with input dimension $n=10$, have been considered. In particular, the input and target output values have been randomly sampled from a uniform distribution over the interval $[-1,1]$. In addition, in this experiment, the simulated annealing algorithm bundled in the Ocean SDK has been employed as the Ising machine for estimating the ground state and the corresponding energy value. Hence, no quantum annealing hardware has been used in this case. The parameters used for simulated annealing can be found directly in the source code at \cite{schmidzenodo2023}. Instead, regarding the parameters of the proposed model, $\lambda$ has been set to $1$, and $\epsilon$ has been set according to (\ref{eq:offset_init_value}) (taking a different value for each dataset). For the training process, $N_\mathrm{epochs} = 50$ epochs have been executed, with $\eta = 0.2$. The MSE loss progression through the training is shown in \cref{fig:random_loss}, where the error bars represent the standard deviation across the datasets. 

\begin{figure}[htb]
    \centering
    \includegraphics[width=0.45\textwidth]{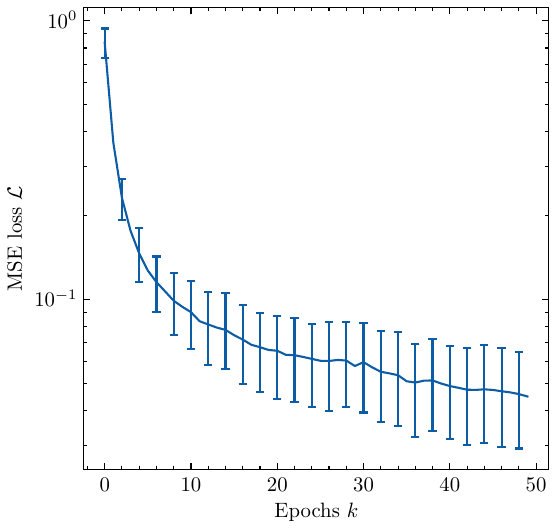}
    \caption{\small \textbf{MSE loss on random data:} Mean squared error, averaged over 30 randomly generated training sets of size $N=20$. The MSE loss is tracked as a function of the number of epochs (with $N_\mathrm{epochs}=50$). The Ising machine in this experiment is the simulated annealing algorithm bundled in the Ocean SDK. The decreasing trend of the loss demonstrates the trainability of the model.}
    \label{fig:random_loss}
\end{figure}

Although this particular example lacks practical significance, it serves as a simple demonstration that the proposed Ising-machine-based parametric model can be effectively trained by utilizing its own output according to the update rules presented in \cref{thm:params-update-rules}. Furthermore, it highlights the fact that the discontinuity observed in the derivative of the optimal spin configuration $\bz^*$, as discussed in the proof of \cref{thm:params-update-rules}, does not hinder the model's ability to minimize the loss function. In essence, the assumption made in (\ref{eq:partial_F_gamma}) regarding the computation of the partial derivatives proves to be sufficiently accurate.

\subsection{Function approximation}
\label{subsec:function-approximation}
In this second experiment, datasets comprising $N=20$ data points sampled from polynomial functions have been considered. Due to the limited quantum annealing time available on the D-Wave hardware, the analysis has been limited to two straightforward cases, and no statistical repetition has been performed. Although this shortage prohibits any general conclusion on the model's performance, it serves as a first demonstration of the possibility of using the model to approximate simple functions. Specifically, the following two polynomial functions of first and second degree, respectively, have been considered:
\begin{align}
    f_\mathrm{lin}(x) &= 2x - 6, \label{eq:function_lin} \\
    f_\mathrm{quad}(x) &= 1.2 \, (x - 0.5)^2 -2 \label{eq:function_quad} \, .
\end{align}
In both cases, the coefficients have been chosen manually and arbitrarily, and the input domain has been restricted to the interval $[0,1]$. As the input dimensionality is $n=1$, additional $n_\mathrm{hidden}$ hidden spins (see \cref{subsec:hidden_spins}) have been considered. In particular, two different total sizes $n_\mathrm{total}=\{50,150\}$ have been analyzed in order to study the effect of the number of hidden spins on the model learning. Additionally, the spins have been initialized using the offset technique described in \cref{subsec:hidden_spins}. Regarding the model parameters, fixed values have been manually chosen for the scaling factor $\lambda$, whereas the offset $\epsilon$ has again been set according to (\ref{eq:offset_init_value}). All model parameters used for the two total sizes considered are summarized in \cref{tab:parameters}. In this case, simulated and quantum annealing have been employed as Ising machines and compared. The simulated annealing parameters are the same as those used in \cref{subsec:random-data}.

\begin{table}[htb]
    \centering
    \caption{\small Parameters used to train the model for the function approximation task.}
    \label{tab:parameters}
    \begin{tabular}{
        l
        S[table-format=3.0]
        c
        S[table-format=-1.4]
        S[table-format=-2.2]
        S[table-format=3.0]
        S[table-format=1.2]
    }
        \toprule
        & {$n_\mathrm{total}$} & {$d$} & {$\lambda$} & {$\epsilon$} & {$N_\mathrm{epochs}$} & {$\eta$} \\ 
        \midrule
        \multirow{2}{*}{$f_\mathrm{lin}$}  & 50  & $0.8/50$  & -0.3    & -9.30 & 200 & 0.02 \\
                                           & 150 & $0.8/150$ & -0.1    & 17.63 & 200 & 0.02 \\
        \midrule
        \multirow{2}{*}{$f_\mathrm{quad}$} & 50  & $1/50$    & -0.05   & -2.70 & 200 & 0.25 \\
                                           & 150 & $1/150$   & -0.0167 & -4.23 & 200 & 0.25 \\
        \bottomrule
    \end{tabular}
\end{table}

\begin{figure}[tp]
    \centering
    \subfloat[$f_\mathrm{lin}$. \label{fig:loss_function_a}]{
        \includegraphics[width=0.47\textwidth]{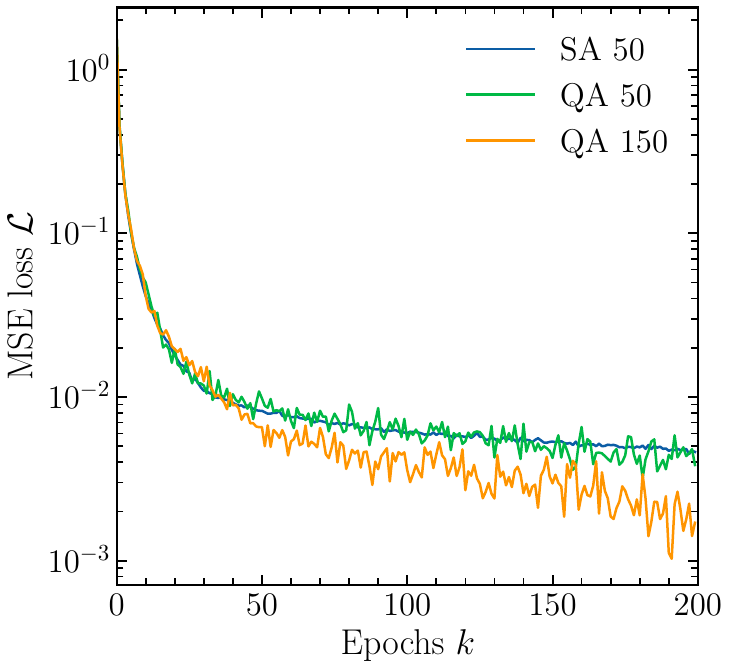}
    }
    \quad
    \subfloat[$f_\mathrm{quad}$. \label{fig:loss_function_b}]{
        \includegraphics[width=0.47\textwidth]{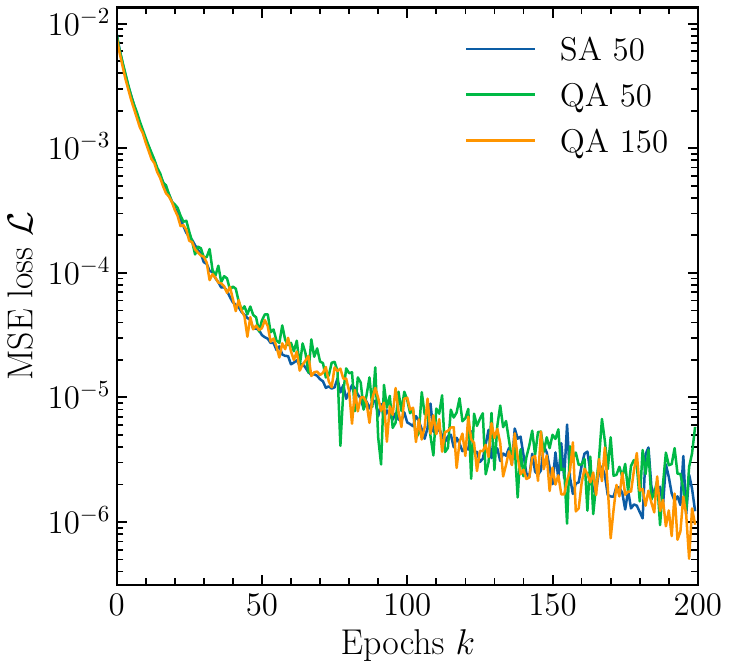}
    }
    \caption{\small \textbf{MSE loss in function approximation:} Evolution of mean squared error loss during training for linear \textbf{(a)} and quadratic \textbf{(b)} functions. The results achieved by both simulated annealing (SA) and quantum annealing (QA) are shown, with the numeric value following the method name representing the total number of hidden spins $n_\mathrm{total}$. SA and QA perform similarly with equal sizes, with the fluctuations of QA being caused by the very low number of reads ($1$). For $f_\mathrm{lin}$, a larger number of hidden spins corresponds to better performance of QA.}
    \label{fig:loss_function}
\end{figure}

The MSE loss throughout the training epochs for the two functions is shown in \cref{fig:loss_function}. In the case of the linear function (\cref{fig:loss_function_a}), the model demonstrates a significant reduction in the mean squared error (MSE), over nearly three orders of magnitude, after approximately 200 optimization steps. Instead, in the case of the quadratic function (\cref{fig:loss_function_b}), the initial loss was already low, indicating that the offset method chosen for the hidden layers was appropriate for this dataset. Nevertheless, the model has managed to decrease the loss by nearly additional three orders of magnitude. It is also worth noting that, in both cases, for equal model sizes, the results achieved using the quantum annealing hardware align closely with those obtained by employing the simulated annealing algorithm. Specifically, the fluctuations in the quantum annealing loss are caused by the very low number of reads ($1$), resulting in non-optimal solutions occasionally returned by the annealer. Finally, the higher number of hidden spins (150) has shown significant advantages only for the linear function.

\begin{figure}[t!]
    \centering
    \subfloat[$f_\mathrm{lin}(x) = 2x - 6$. \label{fig:approximation_function_a}]{
        \includegraphics[width=0.45\textwidth]{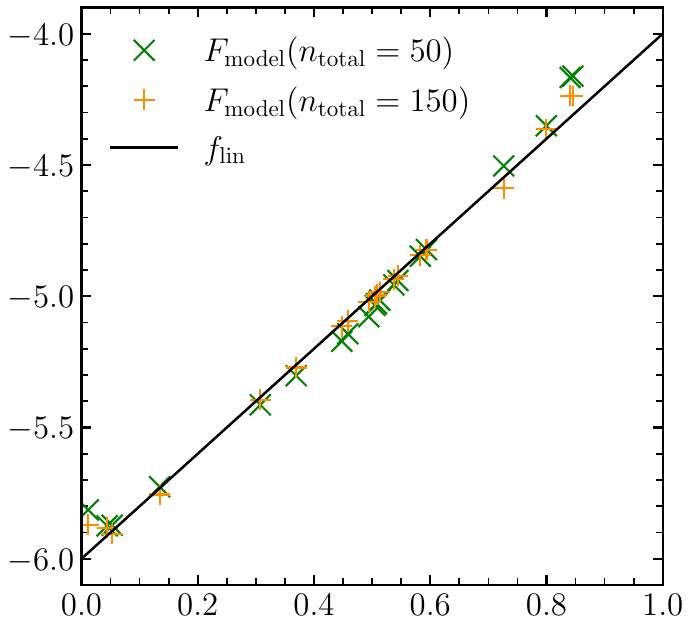}
    }
    \quad
    \subfloat[$f_\mathrm{quad}(x) = 1.2 \, (x - 0.5)^2 -2$. \label{fig:approximation_function_b}]{
        \includegraphics[width=0.46\textwidth]{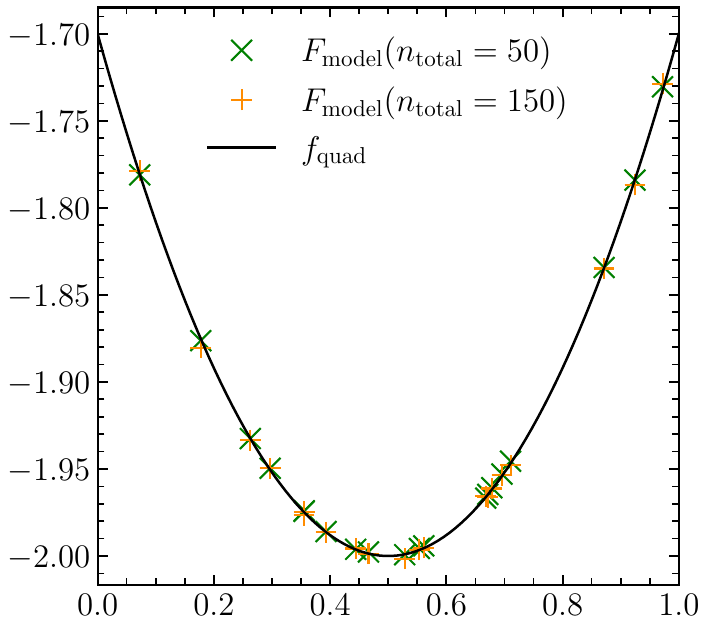}
    }
    \caption{\small \textbf{Trained model output:} Output of the trained model $F_\mathrm{model}$ compared to the original function (black line). In both cases (linear and quadratic), for both $n_{total}$ values, the model demonstrates the ability to approximate the function with good accuracy, performing slightly worse for $f_\mathrm{lin}$, especially toward the edges of the considered interval. }
    \label{fig:approximation_function}
\end{figure}

Instead, \cref{fig:approximation_function} displays the output of the trained models compared to the original functions. It is clear that the model has successfully learned to approximate the target functions. Specifically, as expected from the low final loss value, the model closely aligns with the original function in the case of the quadratic function. Instead, in the linear case, the model performance deteriorates significantly toward the interval edges, and the output values exhibit a tendency toward a shape resembling an even-degree polynomial, especially for the case with less hidden spins ($n_\mathrm{total}=50$). This behavior stems from the initialization method chosen for the hidden spins and the symmetry properties of the Ising model. At extreme bias values, located near the interval boundaries, the biases exert a dominant influence on the energy term in Equation \eqref{eq:Ising_energy}, causing $F(\theta) \rightarrow \infty$ as $\vert \theta \vert \rightarrow \pm \infty$. Consequently, the behavior resembles that of even polynomials, thus explaining the outliers in Figure \ref{fig:approximation_function_a}. Using more hidden spins ($n_\mathrm{total}=150$) reduces this effect by providing more trainable parameters to the model. It is also worth mentioning that different initialization methods for the hidden spins (e.g., taking the inverse values) influence this behavior.

\subsection{Bars and stripes}
\label{subsec:bars-and-stripes}
\begin{figure}[tb]
    \centering
    \includegraphics[width=0.5\textwidth]{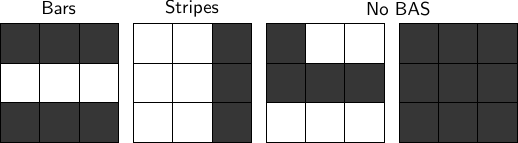}
    \caption{\small \textbf{Bars and stripes (BAS) dataset:} Illustration of exemplary $3\times3$ BAS and non-BAS data samples. The last two samples cannot be uniquely classified as bars or stripes and, therefore, are not part of the BAS dataset.}
    \label{fig:bas-illustration}
\end{figure}
\begin{figure}[tp]
    \centering
    \includegraphics[width=0.7\textwidth]{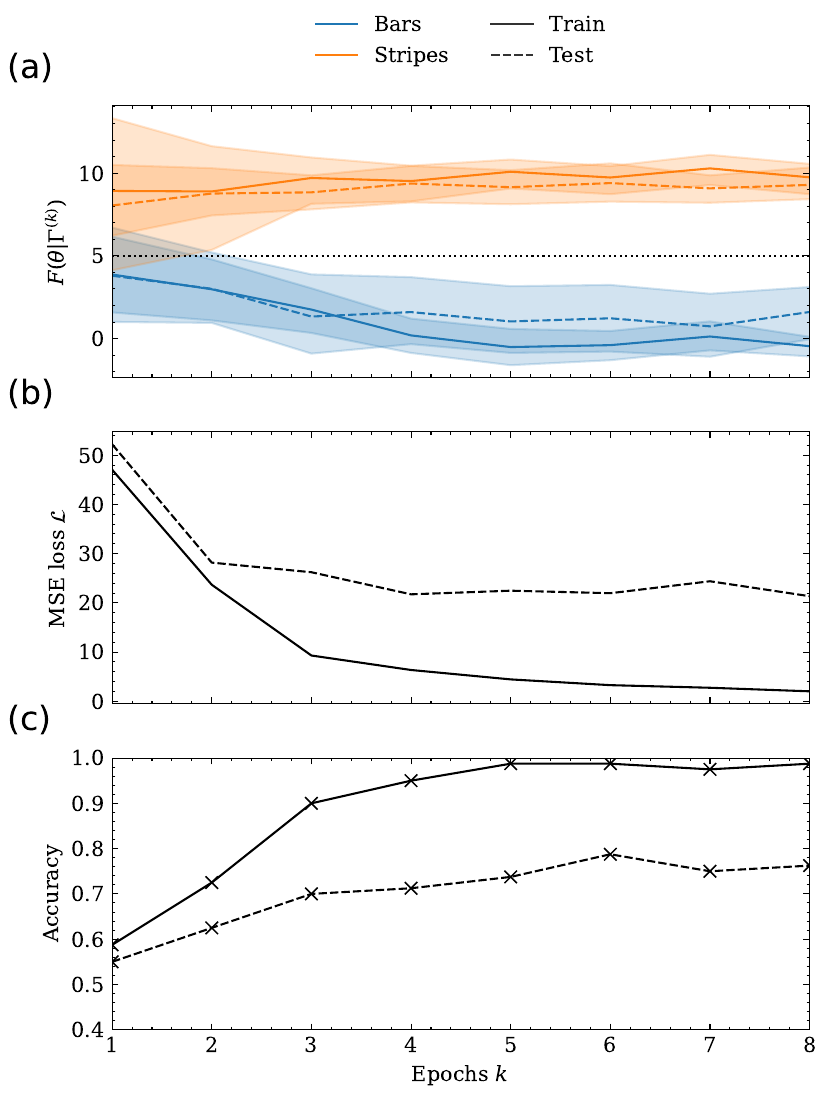}
    \caption{\small \textbf{Results on BAS dataset:} \textbf{(a)} Average model output value $F(\theta \vert \Gamma^{(k)}, \lambda, \epsilon)$ across all the data points with the same label $l \in \{\text{bars},\text{stripes}\}$. The training (solid lines) and test (dashed lines) sets are considered independently; the envelopes represent the standard deviations, and the dotted horizontal line corresponds to the classification threshold according to (\ref{eq:bas-encoding}). During training, the model learns to separate the two classes by increasing the energy for the stripes and decreasing it for the bars. \textbf{(b)} MSE loss for the training and test sets throughout epochs. The decreasing losses denote successful training, but the test loss stagnating after some epochs implies overfitting. \textbf{(c)} Accuracy for the training and test sets throughout the training. The accuracy on the training set reaches almost $1$, with only one misclassified sample, while the accuracy on the test set also increases but saturates at about $75 \%$. }
    \label{fig:bas-results}
\end{figure}
In this last experiment, the proposed model has been applied to a different machine-learning task: binary classification. For this purpose, the well-known bars and stripes (BAS) dataset has been used. In detail, the dataset consists of square matrices with binary entries such that the values in the rows/columns are identical within each row/column; the resulting patterns can be identified as bars/stripes, giving the dataset its name. Actually, the cases in which all entries of the matrix are the same have been left out as the label is not unique. Some examples are shown in \cref{fig:bas-illustration}. Regarding the classification task, it consists in assigning a label $l \in \{\text{bars},\text{stripes}\}$ to each matrix, corresponding to the pattern it represents. In particular, the dataset was created by randomly deciding the label of each data point and randomly assigning one of the two binary values to each row/column. This procedure has been repeated $N$ times, without accounting for duplicates.

In order to apply the proposed model to the BAS dataset, the input matrices have been flattened row-wise, and the binary values have been directly provided as input to the model. The binary labels $l \in \{\text{bars},\text{stripes}\}$ have been encoded into $y$ and decoded from the model output $F_\mathrm{model}$ according to
\begin{equation}
    \label{eq:bas-encoding}
    y = \begin{cases}
        0  & ,l = \text{bars} \\
        10 & ,l = \text{stripes} 
    \end{cases}
    \hspace{1cm}
    l_\mathrm{model} = \begin{cases}
        \text{bars} & ,F_\mathrm{model} \leq 5 \\
        \text{stripes} & ,F_\mathrm{model} > 5 
    \end{cases}
    \, ,
\end{equation}
with the factor $10$ being arbitrarily chosen (different values can be used, but the $\lambda$ and $\epsilon$ parameters must be adjusted accordingly). For the training, a randomly generated dataset comprising $N=80$ data points, with each data point representing a BAS matrix of size $12 \times 12$, has been used. In particular, the model has been trained for $N_\mathrm{epochs} = 8$ epochs, with $\eta = 0.02$, and has been evaluated on a separate test set consisting of other $80$ data points. Since no additional hidden spins have been employed, $n = n_\mathrm{total} = 144$ in this case. Concerning $\lambda$ and $\epsilon$, the former has been manually set to $\lambda = -0.3$, while the latter has been set to $\epsilon = -15.43$ according to (\ref{eq:offset_init_value}). Due to the large number of spins $n_\mathrm{total}=144$, only the quantum annealing hardware was used to train the model.

The results obtained are shown in \cref{fig:bas-results}. Specifically, \cref{fig:bas-results}a displays the model output during training for the training set and test set, respectively. The values shown are the average output values across all the data points with the same label, with the corresponding standard deviations indicated by the transparent envelopes. The dotted horizontal line represents the classification threshold from (\ref{eq:bas-encoding}). In practice, the average output value for the two labels diverges, approaching 0 and 10, respectively, as the number of epochs increases. This means that the model has learned to increase the output value for stripe data points and lower it for samples labeled as bars. This generalizes also to the unseen examples of the test set, but the separation between the two classes is more marked for the training set. This effect is also visible in \cref{fig:bas-results}b, where the MSE loss for the training set and test set is shown. In detail, the training loss decreases in a monotone way, while the test loss stagnates after a few epochs. This is a typical indicator of model overfitting, which could be addressed in different ways, among which increasing the number of training samples $N$ in order to help the model generalize. A similar conclusion can be drawn considering the accuracy of the model shown in \cref{fig:bas-results}c. The trained model is able to correctly classify 79 out of 80 training samples, but the accuracy on the test set saturates at only about $75\%$.

In conclusion, this experiment has demonstrated the possibility of using the proposed model to address also binary classification tasks by choosing an appropriate encoding-decoding procedure for the model input and output. Indeed, the model has proven to be able to generalize to unseen examples while exhibiting overfitting effects, at least for the chosen dataset.

\subsection{Choice of hyperparameters}
\label{subsec:hyperparamters-choice}
Selecting appropriate values for the model's hyperparameters is a common issue in machine learning. Multiple hyperparameters have been manually set in the experiments presented in this work. These include the learning rate $\eta$, the number of epochs $N_\mathrm{epochs}$, the problem encoding (see \ref{eq:bas-encoding}), the Ising machine parameters like the number of samples per step for simulated annealing or the embedding procedure, the annealing time, and the number of reads for quantum annealing. Choosing appropriate values may reduce, for example, the fluctuations observed in \cref{fig:approximation_function_a}. The values used here have been selected based on observations resulting from trial and error runs; the analysis of different configurations and a more systematic approach to choosing appropriate values are left for future work.

Among the model-related hyperparameters, the choice of the initialization strategy for the additional hidden spins has a significant impact. Specifically, when the input dimension is low, a large number of hidden spins $n_\mathrm{hidden} \gg n$ may be necessary in order to have enough trainable model parameters. However, particular care must be put in choosing the corresponding new bias terms. Indeed, in preliminary experiments, it has been observed that initializing the biases in the wrong way may negatively affect the performance to the point that the model is unable to approximate the target function. Finding suitable ansätze for different tasks is still an open question.

\section{Conclusion}
\label{sec:conclusion}
In this paper, we have proposed a novel parametric learning model that leverages the inherent structure of the Ising model for training purposes. We have presented a straightforward optimization procedure based on gradient descent and we have provided the rules for computing all relevant derivatives of the mean squared error loss. Notably, if the Ising machine is realized by a quantum platform, our approach allows for the utilization of quantum resources for both the execution and the training of the model. Experimental results using a D-Wave quantum annealer have demonstrated the successful training of our model on simple proof-of-concept datasets, specifically for linear and quadratic function approximations and binary classification. This novel approach unveils the potential of employing Ising machines, particularly quantum annealers, for general learning tasks. In addition, it raises intriguing theoretical and practical questions from both computer science and physics perspectives. From a theoretical standpoint, questions regarding the expressibility of the Ising model arise, as well as inquiries into the classes of functions that the model can represent. These questions are non-trivial due to the non-linear minimization step involved. From a practical point of view, given the broad definition of the model and its similarity to other classical parametric models, a wide range of machine learning tools and methods can be explored to enhance its training. Advanced gradient-based optimizers and general learning techniques such as mini-batching, early stopping, and dropout, among others, offer promising avenues for improvement. 

In addition to function approximation and binary classification, we aim to investigate the application of the model to other machine learning tasks, especially tasks in which the feature space is large, to reduce the necessity of additional hidden spins. This study might be extended with a comparison to other Ising machine-based models advancing the field of parametric machine learning models utilizing Ising machines.

\paragraph{Funding information}
This work was partially supported by project SERICS (PE00000014) under the MUR National Recovery and Resilience Plan funded by the European Union - NextGenerationEU. E.Z. was supported by Q@TN, the joint lab between University of Trento, FBK-Fondazione Bruno Kessler, INFN-National Institute for Nuclear Physics and CNR-National Research Council. The authors gratefully acknowledge CINECA for providing computing time on the D-Wave quantum annealer within the project \enquote{Testing the learning performances of quantum machines}, and the J\"ulich Supercomputing Center for providing computing time on the D-Wave quantum annealer through the J\"ulich UNified Infrastructure of Quantum computing (JUNIQ).

\bibliography{references.bib}

\nolinenumbers

\end{document}